\documentclass[preprint,number]{elsarticle}

\usepackage[title]{appendix}
\usepackage{amsmath}
\usepackage{amsthm}
\usepackage{amssymb}
\usepackage{bbm}
\usepackage{verbatim}
\usepackage{graphicx}
\usepackage{subfigure}
\usepackage{afterpage}
\usepackage{etoolbox}
\usepackage{footbib}
\usepackage{float}
\usepackage{algorithmic}
\usepackage[algoruled,vlined]{algorithm2e}
\usepackage{multirow}
\usepackage{rotating}

\numberwithin{equation}{section}
\numberwithin{figure}{section}
\numberwithin{table}{section}

\makeatletter
\renewcommand{\p@subfigure}{\thefigure}
\makeatother

\newtheorem{theorem}{Theorem}[section]

\newtheorem{lemma}[theorem]{Lemma}

\newcommand{\repeatable}[2]{\makeatletter \global\expandafter\def\csname repText@#1\endcsname {#2} \makeatother #2}
\newcommand{\repeatxt}[1]{\makeatletter \expandafter\csname repText@#1\endcsname \makeatother}

\newtoggle{citeInAbstract}
\toggletrue{citeInAbstract}

\newcommand{\usecrop}[2]
{
	\newlength{\cropwidth}
	\setlength{\cropwidth}{\the\textwidth}
	\addtolength{\cropwidth}{#1}
	\newlength{\cropheight}
	\setlength{\cropheight}{\the\textheight}
	\addtolength{\cropheight}{#2}
	\usepackage[width=\the\cropwidth,height=\the\cropheight,center]{crop}
}

\DeclareMathAlphabet{\mathpzc}{OT1}{pzc}{m}{it}

\DeclareMathOperator{\erf}{erf}

\newcommand{\transpose}[1]{#1^T}
\newcommand{\abs}[1]{\left | #1 \right |}

\newcommand{\norm}[1]{\left \| #1 \right \|}
\newcommand{\norminline}[1]{\| #1 \|}
\newcommand{\Rn}[1]{{\mathbb{R}^{#1}}}
\newcommand{\ve}{\varepsilon}
\newcommand{\inp}[2]{\left\langle #1 , #2 \right\rangle}
\newcommand{\inpinline}[2]{\langle #1 , #2 \rangle}



\usepackage{upgreek}
\usepackage{isomath}

\usecrop{1.69in}{1.5in}

\journal{Applied and Computational Harmonic Analysis}

\begin{document}

\begin{frontmatter}

\title{Diffusion Representations}
\author{Moshe Salhov}
\author{Amit Bermanis}
\author{Guy Wolf}
\author{Amir Averbuch\corref{maincor}}\ead{amir@math.tau.ac.il}

\address{School of Computer Science, Tel Aviv University, Tel Aviv 69978, Israel}

\cortext[maincor]{Corresponding author, Tel: +972-54-5694455, Fax: +972-3-6422020}

\begin{abstract}
Diffusion Maps framework is a kernel based method for manifold learning and data analysis that defines diffusion similarities by imposing a Markovian process on the given dataset. Analysis by this process uncovers the intrinsic geometric structures in the data. Recently, it was suggested to replace the standard kernel by a measure-based kernel that incorporates information about the density of the data. Thus, the manifold assumption is replaced by a more general measure-based assumption.

The measure-based diffusion kernel incorporates two separate independent representations. The first determines a measure that correlates with a density that represents normal behaviors and patterns in the data. The second consists of the analyzed multidimensional  data points.

In this paper, we present a representation framework for data analysis of datasets that is based on a closed-form decomposition of the measure-based kernel. The proposed representation preserves pairwise diffusion distances that    does not depend on the data size while being  invariant to scale. For a stationary data, no out-of-sample extension is needed for embedding newly arrived data points in the  representation space.  Several aspects of the presented methodology are demonstrated on analytically generated data.

\end{abstract}

\begin{keyword}
Manifold learning \sep kernel PCA \sep Diffusion Maps \sep diffusion distance \sep distance preservation
\end{keyword}

\end{frontmatter}

\section{Introduction}
%

Kernel methods constitute of a wide class of algorithms for non-parametric data analysis of massive high dimensional datasets. Typically, a limited set of underlying factors generates the high dimensional observable parameters via non-linear mappings. The non-parametric nature of these methods enables to uncover hidden structures in the data. These methods extend the well known Multi-Dimensional Scaling (MDS)~\cite{cox:MDS,kruskal:MDS} method. They are based on an affinity kernel construction that encapsulates the relations (distances, similarities or correlations) among multidimensional data points. Spectral analysis of this kernel provides an efficient representation of the data that simplifies its analysis. Methods such as Isomap~\cite{tenenbaum:Isomap}, LLE~\cite{roweis:LLE}, Laplacian eigenmaps~\cite{belkin:LaplacianEigenmaps}, Hessian eigenmaps~\cite{donoho:HessianEigenmaps} and local tangent space alignment~\cite{yang:LTSA,zhang:LTSA}, extend the MDS paradigm by assuming to satisfy the manifold assumption. Under this assumption, the data is assumed to be sampled from a low intrinsic dimensional manifold that captures the dependencies between the observable parameters. The corresponding spectral-based embedding performed by these methods preserves the geometry of the manifold that incorporates the underlying factors in the data.

The diffusion maps (DM) method~\cite{coifman:DM} is a kernel-based
method that defines diffusion similarities for data analyzes by imposing a Markovian  process over the dataset.
It defines a transition probability operator based on local
affinities between multidimensional data points. By spectral
decomposition of this operator, the data is embedded into a low
dimensional Euclidean space, where Euclidean distances represent the diffusion
distances in the ambient space.  When the data is sampled from a
low dimensional manifold, the diffusion paths follow the manifold
and the diffusion distances capture its geometry.

DM embedding was utilized for a wide variety of data and pattern
analysis techniques. For example it was used to improve audio
quality by suppressing transient
interference~\cite{talmon2013single}. It was utilized
in~\cite{Schclar2010111} for detecting and classifying moving vehicles.
Additionally, DM was applied to  scene
classification~\cite{SeamanticVisualWords2009}, gene expression
analysis~\cite{Cancer2007} and source
localization~\cite{TalmonSourceLocal2011}. Furthermore, the DM
method can be utilized for fusing different sources of
data~\cite{LafonDataFusion2006,Keller2010}.

DM embeddings in both the original
version~\cite{coifman:DM, lafon:PhD} and in the measure-based Gaussian
correlation (MGC) version~\cite{wolf:MGC, wolf:MGC-sampta}, are
obtained by the principal eigenvectors of the corresponding
diffusion operator. These eigenvectors represent the long-term
behavior of the diffusion process that captures its metastable
states~\cite{huisinga:metastability} as it converges to a unique
stationary distribution.

The MGC framework~\cite{wolf:MGC, wolf:MGC-sampta} enhances the DM
method by incorporating information about data distribution in addition to the local distances on which DM is based. This
distribution is modeled by a probability measure, which is assumed
to quantify the likelihood of data presence over the geometry of the
space. The measure and its support in this method replace the manifold assumption. Thus, the diffusion process is
accelerated in high density areas in the data rather than being depended
solely on the manifold geometry. As shown in~\cite{wolf:MGC}, the
compactness of the associated integral operator enables to achieve
dimensionality reduction by utilizing the DM framework.

This MGC construction consists of two independent data
points representations. The first represent the domain on which the measure is defined
or, equivalently, the support of the measure. The second represent  the
domain on which the MGC kernel function and the resulting diffusion
process are defined. These \emph{measure domain} and the \emph{analyzed
domain} may, in some cases, be identical, but separate sets can also
be considered by the MGC-based construction. The latter case utilizes a training dataset, which is used as the measure
domain to analyze any similar data that is used as the analyzed
domain. Furthermore, instead of using the collected data as an
analyzed domain, it can be designed as a dictionary or as a grid of
representative data points that capture the essential structure of
the MGC-based diffusion.

In general, kernel methods can find geometrical meaning in a given
data via the application of spectral decomposition. However, this representation
changes as additional data points are added to the given dataset. Furthermore,  the
required computational complexity, which is dictated by spectral
decomposition, is $O(n^3)$ that is not feasible for a very large
dataset.  For example, a segmentation of a medium size image of
$512 \times 512$ pixels requires a kernel matrix of size $2^{18}
\times 2^{18}$. The size of such  matrix necessitated about $270$
GByte of memory assuming double precision. Spectral
decomposition procedure applied to such a matrix is a
formidable slow task. Hence, there is a growing need to have more
computationally efficient methods that are practical for processing
large datasets.

Recently, a method to produce random Fourier features from a given data and a positive kernel was proposed in~\cite{Recht07}. The suggested method provides a random mapping of the data such that
the inner product of any two mapped points approximates the kernel function with high probability. This scheme utilizes Bochner's theorem~\cite{Bochner62} that says that any such kernel is a Fourier transform of a uniquely defined
probability measure. Later, this work was extended in \cite{Maji09,Vedaldi2012} to find explicit features for image classification.

In this paper, we focus on deriving a representation that preserves the diffusion distances between multidimensional data points based on the MGC framework~\cite{wolf:MGC, wolf:MGC-sampta}. This representation is applicable to process efficiently very large datasets by imposing a Markovian diffusion process to define and represent the non-linear relations between multidimensional data points. It provides a diffusion distance metric that correlates with the intrinsic geometry of the data. The suggested representation and its computational complexity cost per a given data point are invariant to the dataset size.

The paper is organized as follows: Section~\ref{sec:problem_formulation} presents the problem formulation. Section~\ref{sec:Explicit Form of the Diffusion Distance} provides an explicit formulation for the diffusion distance. The main results of this paper are established in Sections~\ref{sec:full_dm} and~\ref{sec:trunc_dm} that present the suggested low-dimensional embedding of the data and its characterization. Practical demonstration of the proposed methodology is given in Section~\ref{sec:examples}.

\section{Problem formulation and mathematical preliminaries}
\label{sec:problem_formulation}
Consider a big dataset $X \subseteq
\mathbb{R}^m$ such that for any practical purposes the size of $X$ is considered to be infinite. Without-loss-of-generality, we
assume that for all $x\in X$,    $\| x\| \leq 1$. Implementation of a kernel method, which uses a full spectral decomposition, becomes
impractical when the dataset size is big. Instead, we suggest to represent
the given dataset via the density of data points in it using the MGC kernel. In other
words, let $q: \mathbb{R}^m \rightarrow \left[0,1 \right]$  be the
density function of $X$. We aim to find an explicit embedding function denoted by $f_q : \mathbb{R}^m \rightarrow \mathbb{R}^k$,
where $k$ is the embedding dimension such that $m \ll k$. Each member
in $f_{q}$, which is  denoted by $f_{q}(x)$, $x\in X$, depends on the density function $q$.

DM provides a multiscale view of the data via a family of geometries that are referred to by \emph{diffusion geometries}. Each geometry is defined by both the associated diffusion metric and the diffusion time parameter $t$ that are linked by $d_\varepsilon^{(t)}:X\times X\to\Rn{}_+$ where $\ve$ is a localization parameter. The \emph{diffusion maps} are the associated functions $\Psi^{(t)}: X \to\Rn{k}$ that embed the data into Euclidean spaces, where the diffusion geometries are preserved such that $\norminline{\Psi^{(t)}(x)-\Psi^{(t)}(y)}\approx d_\varepsilon^{(t)}(x,y)$,  $x,y\in X$.

Given an accuracy requirement $\zeta>0$, we aim to design an embedding $f_{q}$  that preserves the diffusion geometry for $t=1$ such that for all $x,y \in X$,
\begin{equation}
\label{eq:appx_dm}
\abs{\norm{f_{q}(x)-f_{q}(y)} - d_\varepsilon^{(1)}(x,y)}\leq\zeta.
\end{equation}
We call the embedding $f_{q}$ the diffusion representation. From the requirement in Eq.~\eqref{eq:appx_dm}, the Euclidian distance between pairs of
representatives approximates the diffusion distance between the
corresponding data points over the density of these data points in the  MGC-based kernel when $t=1$.
If Eq.~\eqref{eq:appx_dm} holds, then $f_q$ preserves the diffusion geometry of the dataset in this sense.

 The rest of this section is dedicated to provide additional details regarding the diffusion geometries that utilize the MGC kernel.

\subsection{Diffusion geometries}
\label{subsec:DG}
A family of diffusion geometries of a measurable space $(X, \mu)$ with a measure $\mu$ is determined by imposing a Markov process over the space. Given a non-negative symmetric kernel function $k_\varepsilon:X\times X\to\Rn{}_+$, then an associated Markov process over the data via the stochastic kernel $p_\varepsilon:X\times X\to\Rn{}_+$ is
\begin{equation}
\label{eq:p}
p_\varepsilon(x,y)\triangleq k_\varepsilon(x,y)/\nu_\varepsilon(x),
\end{equation}
where $\nu_\varepsilon:X\to\Rn{}$ is the local volume function.  In a discrete setting, it is called the degree function. In a continues settings, the local volume function is defined by
\begin{equation}
\label{eq:deg_f}
\nu_\varepsilon(x)\triangleq\int_X k_\varepsilon(x,y)d\mu(y).
\end{equation}
The associated Markovian process over $X$ is defined via the conjugate operator of the integral operator $Pq(x) = \int_X p_\varepsilon(x,y)q(y)d\mu(y)$ that is denoted by $P^\ast$. Thus, for any initial probability distribution $q_0$ over $X$, $q_1=P^\ast q_0$ is the probability distribution over $X$ after a single time step. The probability distribution over $X$ after $t$ time steps is given by the $t$-th power of $P^\ast$. Specifically, if the initial probability measure is concentrated in a specific data point $x\in X$, i.e. $q_0 = \delta(x)$, then the probability distribution after $t$ time steps is $(P^\ast)^t\delta(x)$, denoted also by $p_\varepsilon^{(t)}(x,\cdot)$. Thus, $p_\varepsilon^{(t)}(x,y)$ is the probability that a random walker, which started his walk in $x\in X$, will end in $y\in X$ after $t$ time steps. Based on this, the $t$-time diffusion geometry is defined by the distances between probability distributions such that for all $x,y \in X$
\begin{equation}
\label{eq:pf_dif_dist}
d_\varepsilon^{(t)}(x,y)\triangleq\norm{p_\varepsilon^{(t)}(x,\cdot)-p_\varepsilon^{(t)}(y,\cdot)}_{L^2(\Rn{m})}.
\end{equation}
Equations~\ref{eq:appx_dm} and~\ref{eq:pf_dif_dist}  suggest that the embedding $f_{q}(x)$ approximately preserves (for $t=1$) the distance between probability distributions.
Such a family of geometries can be defined for any Markovian process and not necessarily for a diffusion process. It is proved in~\cite{coifman:DM} that under specific conditions, the defined Markovian process approximates the diffusion over a manifold from which the dataset $X$ is sampled. If the Markovian process is ergodic, then it has a unique probability distribution $\hat{\nu}_\varepsilon:X\to\Rn{}_+$, to which it converges independently of its initial distribution, namely, for any $y\in X$, $\hat{\nu}_\varepsilon(y) = \lim_{t\to\infty}p_\varepsilon^{(t)}(x,y)$, independently of $x$. This probability measure is an $L_1$ normalization of the local volume function (Eq.~\eqref{eq:deg_f}), i.e. $\hat{\nu}_\varepsilon(y) = {\nu}_\varepsilon(y)/\int_X\nu_\varepsilon(y)d\mu(y)$.

\subsection{Measure-based Gaussian Correlation kernel}
The measure-based Gaussian Correlation (MGC) kernel~\cite{wolf:MGC, wolf:MGC-sampta} defines the affinities between elements in $X$, which in this context, it is referred to as the \emph{analyzed domain} via their relations with the reference dataset $M$ that is referred to as the \emph{measure domain}. This framework enables to have a flexible representation of $X$, as long as $M$, which in some sense characterizes the data, is sufficiently large. For example, it was shown in~\cite{MGCunigrid} that in order to compute the MGC-based diffusion to a large, perhaps infinite dataset, it suffices to sample it to compute the associated DM of the sampled dataset $X$ and to extend it to the rest of the dataset via an out-of-sample extension procedure. The sampling rate, shown in~\cite{MGCunigrid},  depends on the density of $M$ and on the required accuracy. In this sense, $X$ is considered as a grid for the whole dataset.

Mathematically, for the analyzed domain $X\subset\Rn{m}$ and for the measure domain $M\subset\Rn{m}$ with a density function $q:M\to\Rn{}_+$ defined on the measure domain, the MGC kernel $k_\varepsilon:X \times X \to\Rn{}_+$ is defined as
\begin{equation}\label{eq:kernel}
k_\varepsilon(x,y)\triangleq\int_{\Rn{m}}g_m(r;x,\frac{\varepsilon}{2}I_m)g_m(r;y,\frac{\varepsilon}{2}I_m)q(r)dr,
\end{equation}
where  $I_m$ is an $m\times m$ unit matrix. For a fixed mean vector $\theta\in\Rn{m}$ and a covariance matrix $\Sigma\in\Rn{m\times m}$,  $~g_m(r;\theta,\Sigma):\Rn{m}\to\Rn{}_+$ is the normalized Gaussian function given by
\begin{equation}\label{eq:g_d_def}
g_m(r;\theta,\Sigma)\triangleq\frac{1}{(2\pi)^{m/2}\abs{\Sigma}^{1/2}}\exp\left\{-\frac{1}{2}(r-\theta)^T\Sigma^{-1}(r-\theta)\right\}.
\end{equation}
Since the MGC kernel in Eq.~\eqref{eq:kernel} is symmetric and positive, it can be utilized to establish a Markov process as was described in Section~\ref{subsec:DG}. The associated diffusion parameters from Eqs.~\eqref{eq:p} ,~\eqref{eq:deg_f} and~\eqref{eq:pf_dif_dist} are $p_\ve$, $\nu_\ve$ and~$d_\varepsilon^{(t)}$, respectively.

\section{Explicit forms for the diffusion distance and stationary distribution}
\label{sec:Explicit Form of the Diffusion Distance}

In general, the integral in Eq.~\eqref{eq:kernel} does not have an
explicit form. However, for our purposes, we adopt the Gaussian Mixture Model (GMM), which assumes that the density $q$ is a superposition of normal distributions. Under this assumption, $q$ takes the form
\begin{equation}
\label{eq:gaus_mix}
q(r)=\sum_{j=1}^n a_j g_m(r;\theta_j,\Sigma_j),\,\sum_{j=1}^n a_j=1,
\end{equation}
for appropriate mean vectors $\theta_j$ and covariance matrices $\Sigma_j$, $j=1,\ldots,n$, (see Eq.~\eqref{eq:g_d_def}). Estimating Eq.~\eqref{eq:gaus_mix} is a generally known problem that has been extensively investigated such as in~\cite{Day1969,mclachlan88} with many published implementations. Such an estimation enables to provide an explicit (closed form) representation of the diffusion geometry in Eq.~\eqref{eq:pf_dif_dist}.

First, a closed form for the inner product $\inpinline{p^{(1)}_\ve(x,\cdot)}{p^{(1)}_\ve(z,\cdot)}_{L^2(\Rn{k})}$, $x,z\in X$, is presented. This inner product closed form enables to get an explicit formulation for the first time step of the MGC-based DM distance $d_\ve^{(1)}(x,z)$. This formalism is established in Theorem~\ref{thm:gmm_bsc}, whose proof appears in~\ref{apx:ThmProof}.

\begin{theorem}
\label{thm:gmm_bsc}
Assume that the GMM assumption in Eq.~\eqref{eq:gaus_mix} holds and let $W_{x,z}$ be the inner product  $W_{x,z}\triangleq \inp{k_\ve(x,\cdot)}{k_\ve(z,\cdot)}_{L^2(\Rn{k})}$. Denote  $D_j \triangleq (\varepsilon^{-1}I_m+(\varepsilon I_m+4\Sigma_j)^{-1})^{-1}$ and $c_j(x)  \triangleq  D_j (\ve^{-1}x+(\ve I_m+4\Sigma_j)^{-1}(2\theta_j-x))$. Let $\tilde{\Sigma}_j$ be defined by
\begin{eqnarray}
\label{eq:sig_tild}
\tilde{\Sigma}_j & \triangleq & (\ve/2)I_m+\Sigma_j.
\end{eqnarray}
Then, for any $x,z\in X$, the kernel affinity $k_\ve(x,z)$, the stationary distribution $\nu_\ve(x)$ and the inner product $W_{x,z}$ have explicit forms given by
\begin{eqnarray}
k_\ve(x,z) & = & \sum_{j=1}^n a_j
g_m(x;\theta_j,\tilde{\Sigma}_j)g_m(z;c_j(x),D_j),
\end{eqnarray}
\begin{eqnarray}
\nu_\ve(x) & = & \sum_{j=1}^n a_j g_m(x;\theta_j,\tilde{\Sigma}_j)\label{eq:cf_of_neu}
\end{eqnarray}
and
\begin{eqnarray}
\label{eq:inner_int}
W_{x,z} = \sum_{j,i=1}^n a_j a_k g_m(x;\theta_j,\tilde{\Sigma}_j)g_m(z;\theta_i,\tilde{\Sigma}_i)g_m(c_j(x);c_i(z),D_j+D_i).
\end{eqnarray}
\end{theorem}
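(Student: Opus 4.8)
The plan is to reduce everything to two elementary facts about multivariate Gaussians and then apply them mechanically. The first is the Gaussian product identity: for any means $\mu_1,\mu_2$ and covariances $\Sigma_1,\Sigma_2$,
\[
g_m(r;\mu_1,\Sigma_1)\,g_m(r;\mu_2,\Sigma_2)=g_m(\mu_1;\mu_2,\Sigma_1+\Sigma_2)\,g_m(r;\mu_c,\Sigma_c),
\]
where $\Sigma_c=(\Sigma_1^{-1}+\Sigma_2^{-1})^{-1}$ and $\mu_c=\Sigma_c(\Sigma_1^{-1}\mu_1+\Sigma_2^{-1}\mu_2)$, so that the only factor depending on $\mu_1,\mu_2$ jointly is the scalar prefactor. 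The second is the normalization $\int_{\Rn{m}}g_m(r;\mu,\Sigma)\,dr=1$, whose immediate corollary (integrate the product identity in $r$) is $\int_{\Rn{m}}g_m(r;\mu_1,\Sigma_1)g_m(r;\mu_2,\Sigma_2)\,dr=g_m(\mu_1;\mu_2,\Sigma_1+\Sigma_2)$. Every claimed formula will fall out of applying these two facts to the triple products of Gaussians that appear once the mixture in Eq.~\eqref{eq:gaus_mix} is substituted.

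For $k_\varepsilon(x,z)$ I would insert Eq.~\eqref{eq:gaus_mix} into Eq.~\eqref{eq:kernel}, interchange sum and integral, and treat each summand separately. The key ordering choice is to first fuse the $x$-Gaussian $g_m(r;x,\tfrac{\varepsilon}{2}I_m)$ with the $j$-th mixture Gaussian $g_m(r;\theta_j,\Sigma_j)$ via the product identity; this produces exactly the prefactor $g_m(x;\theta_j,\tfrac{\varepsilon}{2}I_m+\Sigma_j)=g_m(x;\theta_j,\tilde{\Sigma}_j)$ together with a single intermediate Gaussian $g_m(r;\mu_c,\Sigma_c)$ in $r$. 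Fusing this intermediate Gaussian with the remaining $z$-Gaussian $g_m(r;z,\tfrac{\varepsilon}{2}I_m)$ and integrating out $r$ with the corollary yields $g_m(z;\mu_c,\tfrac{\varepsilon}{2}I_m+\Sigma_c)$, producing the asymmetric shape $g_m(x;\theta_j,\tilde{\Sigma}_j)\,g_m(z;\mu_c,\tfrac{\varepsilon}{2}I_m+\Sigma_c)$ that the theorem claims.

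What remains is to check that the bookkeeping matrices coincide with the stated ones, i.e. $\tfrac{\varepsilon}{2}I_m+\Sigma_c=D_j$ and $\mu_c=c_j(x)$. This is pure linear algebra, and it is made painless by the observation that every matrix in sight is a rational function of the single matrix $\Sigma_j$ together with $I_m$, so they all commute and may be manipulated as if they were scalars. Starting from $\Sigma_c=(\tfrac{2}{\varepsilon}I_m+\Sigma_j^{-1})^{-1}=\tfrac{\varepsilon}{2}\tilde{\Sigma}_j^{-1}\Sigma_j$, one verifies $\tfrac{\varepsilon}{2}I_m+\Sigma_c=\tfrac{\varepsilon}{2}(\varepsilon I_m+2\Sigma_j)^{-1}(\varepsilon I_m+4\Sigma_j)$, and separately that the stated $D_j^{-1}=\varepsilon^{-1}I_m+(\varepsilon I_m+4\Sigma_j)^{-1}$ inverts to the very same expression; likewise $\mu_c=\tilde{\Sigma}_j^{-1}(\Sigma_j x+\tfrac{\varepsilon}{2}\theta_j)$ is shown to equal $c_j(x)=D_j(\varepsilon^{-1}x+(\varepsilon I_m+4\Sigma_j)^{-1}(2\theta_j-x))$ by clearing the inverses. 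This commuting-matrix reduction is the only genuinely fiddly step, and it is where I would be most careful to avoid a stray factor or sign error.

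The remaining two formulas are then corollaries requiring no new identities. For $\nu_\varepsilon(x)=\int_{\Rn{m}}k_\varepsilon(x,y)\,d\mu(y)$ with $\mu$ Lebesgue measure, the fastest route is to integrate the closed form of $k_\varepsilon(x,y)$ over $y$: normalization kills the factor $g_m(y;c_j(x),D_j)$, leaving $\nu_\varepsilon(x)=\sum_j a_j g_m(x;\theta_j,\tilde{\Sigma}_j)$; equivalently, integrating $g_m(r;y,\tfrac{\varepsilon}{2}I_m)$ over $y$ first in Eq.~\eqref{eq:kernel} collapses the definition to $\int g_m(r;x,\tfrac{\varepsilon}{2}I_m)q(r)\,dr$, which the corollary evaluates to the same sum. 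For $W_{x,z}=\int_{\Rn{m}}k_\varepsilon(x,y)k_\varepsilon(z,y)\,dy$ I would substitute the closed form of $k_\varepsilon$ in both arguments, expand into the double sum over $j,i$, pull the prefactors $g_m(x;\theta_j,\tilde{\Sigma}_j)g_m(z;\theta_i,\tilde{\Sigma}_i)$ out of the $y$-integral, and apply the corollary once more to $\int g_m(y;c_j(x),D_j)g_m(y;c_i(z),D_i)\,dy=g_m(c_j(x);c_i(z),D_j+D_i)$, reproducing Eq.~\eqref{eq:inner_int} (with the index $a_k$ there read as $a_i$).
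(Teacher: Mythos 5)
Your proof is correct, and while it rests on the same two Gaussian facts as the paper's (the product identity and its integrated corollary), it takes a genuinely different route through the kernel closed form. The paper starts from the reformulation $k_\ve(x,y)=g_m(y;x,\ve I_m)\int g_m\left(r;\frac{x+y}{2},\frac{\ve}{4}I_m\right)q(r)\,dr$ imported from the MGC paper---that is, it first fuses the two kernel Gaussians in $r$, then integrates the midpoint Gaussian against the $j$-th mixture component, re-reads the result as a Gaussian in $y$ via the extra rescaling identity $g_m\left(\frac{x+y}{2};\theta,\Sigma\right)=4^{d/2}g_m(y+x;2\theta,4\Sigma)$, and fuses once more with $g_m(y;x,\ve I_m)$; with that association order the product identity emits $D_j$ and $c_j(x)$ \emph{verbatim} in the theorem's parametrization, since $\Lambda=(\ve^{-1}I_m+(\ve I_m+4\Sigma_j)^{-1})^{-1}=D_j$ and $\psi=c_j(x)$ on the nose. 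You associate the triple product the other way---fusing $g_m(r;x,\frac{\ve}{2}I_m)$ with $g_m(r;\theta_j,\Sigma_j)$ first, which produces the prefactor $g_m(x;\theta_j,\tilde{\Sigma}_j)$ immediately, then integrating against the $z$-Gaussian. This buys a leaner toolkit (no midpoint rescaling trick, no $4^{d/2}$ bookkeeping, no reliance on the cited reformulation) at the price of the commuting-matrix identification step, which in the paper's route is unnecessary. I verified your algebra: both $\frac{\ve}{2}I_m+\Sigma_c$ and $D_j$ reduce to $\frac{\ve}{2}(\ve I_m+2\Sigma_j)^{-1}(\ve I_m+4\Sigma_j)$, and both $\mu_c$ and $c_j(x)$ reduce to $(\ve I_m+2\Sigma_j)^{-1}(2\Sigma_j x+\ve\theta_j)$, the commuting claim being legitimate since every matrix in play is a rational function of $\Sigma_j$ alone. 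Your derivations of $\nu_\ve$ and $W_{x,z}$ coincide with the paper's (normalization kills $g_m(y;c_j(x),D_j)$; one application of the correlation identity over $y$ gives $g_m(c_j(x);c_i(z),D_j+D_i)$), and your reading of the statement's stray $a_k$ as $a_i$ matches the appendix, which indeed writes $a_j a_i$ (and itself carries the cognate typos $c_j(z)$ and $c_k(z)$ for $c_i(z)$).
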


Combination of Theorem~\ref{thm:gmm_bsc} with Eqs.~\eqref{eq:p}, \eqref{eq:deg_f} and \eqref{eq:pf_dif_dist} provides a closed form solution for the first time step ($t=1$) diffusion metric to be
\begin{equation}
\label{eq:inner_prod_cf}
d_\varepsilon^{(1)}(x,y) =  \frac{W_{x,x}}{\nu_\ve(x)\nu_\ve(x)}+\frac{W_{z,z}}{\nu_\ve(z)\nu_\ve(z)}-\frac{2W_{x,z}}{\nu_\ve(x)\nu_\ve(z)}.
\end{equation}
Moreover, by combining Eqs.~\eqref{eq:g_d_def},~\eqref{eq:gaus_mix} and~\eqref{eq:cf_of_neu} we get $\norminline{\nu_\ve}_1 = 1$. Thus, the local volume function is equal to the stationary distribution (Eq.~\eqref{eq:deg_f}) of the MGC-based Markov process.

\section{Explicit Diffusion Maps of the analyzed domain }
\label{sec:full_dm}
The diffusion distance provides a relation between a pair of data
points in the analyzed domain. In this section, we find a
representation of any data point in the analyzed domain that
preserves the diffusion distance relation. We assume that the covariances in the GMM (Eq.~\eqref{eq:gaus_mix}) are all identical, namely, $\Sigma_j = \Sigma$, $j=1,\ldots,n$, and that the analyzed domain $X$ is a subset of the unit ball in $\Rn{m}$. The Taylor extension
\begin{equation}
\label{eq:exp}
e^{\gamma} = \sum_{i=0}^\infty \frac{\gamma^i}{i!},
\end{equation}
where ${\gamma}$ is a scalar is reformulated with ${\gamma}=-\frac{1}{2}\| x - y
\|^2$ to be
\begin{equation}
\label{eq:exp_norm} e^{ -\frac{1}{2}\| x - y \|^2_2} =e^{
-\frac{1}{2}\| x\|^2}   e^{-\frac{1}{2}\| y\|^2}  \sum_{i=0}^\infty
\frac{ \left(x^T y \right) ^i}{i!}.
\end{equation}
The exponent in Eq.~\eqref{eq:exp_norm} is formulated as an inner product
such that
\begin{equation}
\label{eq:inner_prod_cf_innprod}
e^{ -\frac{1}{2}\| x - y \|^2_2} =e^{  -\frac{1}{2}\| x\|^2}   e^{-\frac{1}{2}\| y\|^2} \phi \left( x \right)^T \phi \left( y \right),
\end{equation}
where $\phi \left( x \right)$ is a vector of infinite length whose first term ($i=1$) is $\left[\phi \left( x \right) \right]_1 = 1$ and for $i>1$ we have the generating function
$\left[\phi \left( x \right) \right]_{i+1} = \frac{1}{\sqrt{i+1 }} \left[ \phi \left( x \right)\right]_i \otimes x$ where $\otimes$ is the Kronecker product operator.
In the following we consider the vector $\left[ \phi \left( x \right)\right]_i $ as a single term of $\phi \left( x \right)$ that corresponds to the $i$th term from the relevant Taylor expansion.

For the diffusion representation, we reformulate the inner product
in Eq.~\eqref{eq:inner_int} to be a vector multiplication of two vectors.
The first depends only on $x$ and the second  depends only on $z$.
Let $h\left( x \right)$ be a $n$-dimensional vector with the entries
$\left[h\left( x \right) \right]_j \triangleq a_j
g_m(x;\theta_j,\tilde{\Sigma}) $ (Eq.~\eqref{eq:sig_tild}). Then, the inner product in
Eq.~\eqref{eq:inner_int} is given by the  matrix multiplication
\begin{equation}
\label{eq:inn_matform}
W_{x,z} = h\left( x \right)^T G h\left( z \right),
\end{equation}
where the $j$ and $k$ entry in the $n \times n$ matrix $G$ is given by
$g_m(c_j(x);c_k(z),D_j+D_k)$. The scalar $\left[ G \right]_{j,k}$
depends on both $x$ and $z$. From the assumption on the GMM
structure, we have $D_j= D_k = D$. Let $\hat{c}_j(x) \triangleq
\left(2D\right)^{-\frac{1}{2}}c_j(x)$ then   $\left[ G \right]_{j,k}$ can be
reformulated by
\begin{equation}
\label{eq:g_ele}
    \left[ G \right]_{j,k} = \frac{1}{(2\pi)^{m/2}\abs{2 D}^{1/2}}\exp\left\{- \frac{1}{2} (\| \hat{c}_j(x)-\hat{c}_k(z) \|^2) \right\}.
\end{equation}
By reformulating Eq.~\eqref{eq:g_ele} using the inner product in Eq.~\eqref{eq:inner_prod_cf_innprod}, we
get
\begin{equation}
    \left[ G \right]_{j,k} = \frac{1}{(2\pi)^{m/2}\abs{2 D}^{1/2}} e^{- \frac{1}{2} \| \hat{c}_j(x) \|^2}   e^{ -\frac{1}{2} \| \hat{c}_k(z)\|^2}   \phi \left( \hat{c}_j(x) \right)^T \phi \left( \hat{c}_k(z) \right).
\end{equation}
Hence, the matrix $G$ is the result from an  outer-product of the form
\begin{equation}
\label{eq:G_outer_prod}
     G =  \frac{(2\pi)^{-m/2}}{\abs{2 D}^{1/2}} F^T F,
  \end{equation}
 where
\begin{equation}
\label{eq:G_outer_prod}
     F =  \left[
   e^{ -\frac{1}{2} \| \hat{c}_1(z)\|^2} \phi \left( \hat{c}_1(z) \right)  ,\cdots,    e^{ -\frac{1}{2} \| \hat{c}_n(z)\|^2} \phi \left( \hat{c}_n(z) \right)  \right].
  \end{equation}
By substituting   the outer-product from Eq.~\eqref{eq:G_outer_prod} into Eq.~\eqref{eq:inn_matform}, we get
 \begin{equation}
\label{eq:inn_matform_vect}
 \frac{ h\left( x \right)^T G h\left( z \right) }{\nu_\ve(x) \nu_\ve(z)}= f\left( x \right)^T  f\left( z \right),
\end{equation}
where
 \begin{equation}
\label{eq:rep_diff_infinite}
 \transpose{f}\left( x \right) = \frac{(2\pi)^{-m/4}}{\abs{2 D}^{1/4} \nu_\ve(x) }  h\left( x \right)^T F^T
\end{equation}
and the infinite dimensional vector $ f\left( x \right)$ is the weighted sum of the  inner product in the exponent decompositions (Eq.~\eqref{eq:inner_prod_cf_innprod}).
\begin{lemma}
\label{lem:diff_rep}
The diffusion representations $f\left( x \right)$ and  $f\left( z \right)$ in Eq.~\eqref{eq:rep_diff_infinite} preserve the diffusion distance in Eq.~\eqref{eq:inner_prod_cf}.
\end{lemma}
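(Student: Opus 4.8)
The plan is to show that the squared Euclidean distance $\norm{f(x)-f(z)}^2$ between the two representations equals the right-hand side of Eq.~\eqref{eq:inner_prod_cf}, the squared $t=1$ diffusion metric, so that $f$ reproduces the MGC-based diffusion distance. The single fact that drives everything is the factorized inner-product identity already assembled in Eq.~\eqref{eq:inn_matform_vect}. Read together with Eq.~\eqref{eq:inn_matform}, namely $W_{x,z}=\transpose{h(x)}Gh(z)$, it states precisely that
\begin{equation*}
\transpose{f}(x)\,f(z)=\frac{W_{x,z}}{\nu_\ve(x)\nu_\ve(z)}.
\end{equation*}
Because the derivation leading to Eq.~\eqref{eq:inn_matform_vect} is symmetric in its two arguments, this identity holds verbatim for the diagonal pairs $(x,x)$ and $(z,z)$ as well.

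With this identity in hand the remainder is a one-line expansion. First I would write $\norm{f(x)-f(z)}^2=\transpose{f}(x)f(x)+\transpose{f}(z)f(z)-2\transpose{f}(x)f(z)$ by bilinearity of the Euclidean inner product. Then I would substitute the displayed identity three times, once for each pair, obtaining $W_{x,x}/\nu_\ve(x)^2+W_{z,z}/\nu_\ve(z)^2-2W_{x,z}/(\nu_\ve(x)\nu_\ve(z))$, which is exactly the expression in Eq.~\eqref{eq:inner_prod_cf}. Since that expression equals $\bigl(d_\ve^{(1)}(x,z)\bigr)^2$ by the combination of Theorem~\ref{thm:gmm_bsc} with Eqs.~\eqref{eq:p}, \eqref{eq:deg_f} and~\eqref{eq:pf_dif_dist}, we conclude $\norm{f(x)-f(z)}=d_\ve^{(1)}(x,z)$, i.e.\ the diffusion distance is preserved.

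The real content, and the step I expect to require the most care, is not this final expansion but the justification of the identity Eq.~\eqref{eq:inn_matform_vect}, which at first glance looks suspicious because the matrix $G$ depends on \emph{both} $x$ and $z$ through its entries $g_m(c_j(x);c_k(z),D_j+D_k)$. The point is that, under the equal-covariance assumption $D_j=D$, the Taylor decomposition of the Gaussian exponent in Eqs.~\eqref{eq:exp_norm}--\eqref{eq:inner_prod_cf_innprod} splits each entry $[G]_{j,k}$ into a product $e^{-\frac{1}{2}\norm{\hat{c}_j(x)}^2}\transpose{\phi(\hat{c}_j(x))}\,\phi(\hat{c}_k(z))\,e^{-\frac{1}{2}\norm{\hat{c}_k(z)}^2}$ whose $j$-factor depends only on $x$ and whose $k$-factor depends only on $z$. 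Consequently $G$ factors as $\tfrac{(2\pi)^{-m/2}}{\abs{2D}^{1/2}}\transpose{F(x)}F(z)$, and substituting this into $\transpose{h(x)}Gh(z)$ distributes the scalar $\abs{2D}^{-1/4}(2\pi)^{-m/4}$ and the normalizer $\nu_\ve$ symmetrically between the two factors, yielding $f$ exactly as defined in Eq.~\eqref{eq:rep_diff_infinite}. I would therefore make this single-point separability of the columns of $F$ explicit, since it is what legitimizes treating $f(x)$ as a genuine per-point representation rather than a quantity coupling $x$ and $z$.
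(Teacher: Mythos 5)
Your proposal is correct and takes essentially the same route as the paper's own proof: both rest on the factorized inner-product identity $\transpose{f}(x)\,f(z)=W_{x,z}/(\nu_\ve(x)\nu_\ve(z))$ from Eqs.~\eqref{eq:inn_matform} and~\eqref{eq:inn_matform_vect}, expanded by bilinearity and matched against Eqs.~\eqref{eq:inner_prod_cf} and~\eqref{eq:pf_dif_dist}. Your extra care in making the $x$/$z$ separability of the factorization $G=\frac{(2\pi)^{-m/2}}{\abs{2D}^{1/2}}\transpose{F(x)}F(z)$ explicit, and in reading Eq.~\eqref{eq:inner_prod_cf} as the \emph{squared} metric (a point the paper's notation leaves implicit), merely fills in steps the paper's one-line proof asserts by citation.
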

\begin{proof}
Equation~\ref{eq:pf_dif_dist} together with Eqs.~\eqref{eq:inner_prod_cf},~\eqref{eq:inn_matform} and~\eqref{eq:inn_matform_vect} yield
\begin{equation}
\label{eq:diff_dist_preserv}
\|f\left( x \right) - f\left( z \right)\|_{\ell^2(\Rn{m})} =  \norm{p_\varepsilon(x,\cdot)-p_\varepsilon(z,\cdot)}_{L^2(\Rn{m})} .
\end{equation}
\end{proof}
From Lemma~\ref{lem:diff_rep} we conclude that $f\left( x \right)$ is an embedding that preserve the diffusion geometry (for $t=1$) where $\zeta=0$ in Eq.~\ref{eq:appx_dm}.
\section{Truncated diffusion representation}
\label{sec:trunc_dm}
The diffusion representation $f(x)$ in Lemma~\ref{lem:diff_rep} is
an infinite size vector. In this section, this vector is truncated and an error estimate for the resulted
truncated diffusion distance is provided. The infinite number of entries in
$f(x)$  originates from the Gaussian decomposition  multiplication in Eq.~\eqref{eq:inner_prod_cf_innprod}. Let $e^{\gamma}$ be
a Taylor expended function on the interval $0 \leq \gamma \leq \gamma_{max}$,
where $\gamma_{max}$ is an arbitrary upper limit of $\gamma$. Then, the Lagrange
remainder of the $l$th order Taylor (Maclaurin) expansion of $e^{\gamma}$
on this interval is given by
\begin{equation}
\label{eq:R_l}
R_l\left(\gamma \right) = \frac{e^b}{\left(l+1 \right)!}\gamma^{l+1},~~~0 \leq b \leq \gamma_{max}.
\end{equation}
Since $e^b \leq e^{\gamma_{max}}$ we have
 \begin{equation}
\label{eq:R_l_b}
R_l\left(\gamma \right) \leq \frac{e^{\gamma_{max}}}{\left(l+1 \right)!}\gamma^{l+1}.
\end{equation}
Let $f_l \left(x \right)$ be the $l$th order truncated version of $
f\left( x \right)$  from Eq~\eqref{eq:rep_diff_infinite} that is given by,
 \begin{equation}
\label{eq:rep_diff_trancated}
 \transpose{f}_l\left( x \right) = \frac{(2\pi)^{-m/4}}{\abs{2 D}^{1/4} \nu_\ve(x) }  h\left( x \right)^T \hat{F}^T
\end{equation}
where the matrix $\hat{F}$ is given by
 \begin{equation}
 \label{eq:trancated_frob}
\hat{F} \left( x \right) = \left[ \begin{array}{c}
 e^{- \frac{1}{2} \| \hat{c}_1(x) \|^2} \hat{\phi}_{l} \left( \hat{c}_1(x) \right) \\
        \vdots      \\
e^{- \frac{1}{2} \| \hat{c}_n(x) \|^2}   \hat{\phi}_{l} \left( \hat{c}_n(x) \right)    \end{array} \right]
 \end{equation}
and  $\hat{\phi}_{l} \left( x\right) $ is a vector that contains the first $l$  terms from ${\phi}_{l} \left( x\right) $.
Furthermore, let $\phi_{l+1} \left( x \right)$ be the vector
that contains the truncated terms
from the $(l+1)$th term in $\phi \left( x \right)$ to $\infty$. Then, the diffusion
distance error is given by
 \begin{equation}
\label{eq:rep_diff_dist}
 \|{f\left( x \right) - f_l\left( x \right)} \|^2_F = \frac{(2\pi)^{-m/2}}{\abs{2 D}^{1/2} \nu_\ve(x)^2 } \| F_l\left( x \right) ^T h\left( x \right) \|^2,
\end{equation}
where the matrix $F_l$ is given by
  \begin{equation}
 \label{eq:trancated_frob}
F_l \left( x \right) = \left[ \begin{array}{c}
 e^{- \frac{1}{2} \| \hat{c}_1(x) \|^2} \phi_{l+1} \left( \hat{c}_1(x) \right) \\
        \vdots      \\
e^{- \frac{1}{2} \| \hat{c}_n(x) \|^2}   \phi_{l+1} \left( \hat{c}_n(x) \right)    \end{array} \right].
 \end{equation}
The Frobenius matrix norm and the Euclidean vector norm are compatible~\cite{Meyer:2000} such that
 \begin{equation}
\label{eq:rep_diff_invc1}
 \| F_l\left( x \right) ^T h\left( x \right) \|^2_2 \leq  \| F_l\left( x \right) ^T  \|^2_F  \| h\left( x \right) \|^2_2.
\end{equation}
From Eqs.~\eqref{eq:g_d_def} and \eqref{eq:gaus_mix} we have $\sum_{j=1}^n a_j=1$ and $g_m(r;\theta_j,\Sigma_j) \leq 1$, then,
$\| h\left( x \right) \| \leq 1$. Furthermore,  from the Frobenius matrix
norm definition we have
 \begin{equation}
\label{eq:rep_bound_F}
 \| F_l\left( x \right) ^T \|^2_F = \sum_i e^{-  \| \hat{c}_i(x) \|^2} \phi_{l+1} \left( \hat{c}_i(x) \right)^T \phi_{l+1} \left( \hat{c}_i(x) \right).
\end{equation}
The product $\phi_{l+1} \left( \hat{c}_i(x) \right)^T \phi_{l+1}
\left( \hat{c}_i(x) \right)$ is the remainder of the Taylor series.
By using Eq.~\eqref{eq:R_l_b}, we get
 \begin{equation}
\label{eq:c_riminder}
    \phi_{l+1} \left( \hat{c}_i(x) \right)^T \phi_{l+1} \left( \hat{c}_i(x) \right) \leq \frac{e^{{\| \hat{c}_i(x)\|^2}}}{\left(l+1 \right)!}{\| \hat{c}_i(x)\|}^{2(l+1)},
\end{equation}
since in this case ${\gamma}_{max} = \| \hat{c}_i(x)\|^2$. By
substituting Eq.~\eqref{eq:c_riminder} into Eq.~\eqref{eq:rep_bound_F} and finding the argmax of $\| \hat{c}_i(x)\|$ over $x$,
we get
 \begin{equation}
\label{eq:rep_bound_F2}
 \| F_l\left( x \right) ^T \|^2_F  \leq  \sum_{i=1}^n \frac{\| \hat{c}_i(x_i^*)\|^{2(l+1)}}{\left(l+1 \right)!},
\end{equation}
where $x_i^*$ is the solution to the Trust Region Problem (TRP) of
the form
 \begin{equation}
\label{eq:trust_region1}
    x_i^* =  \max_{\| x\| \leq 1} \hat{c}_i(x)^T\hat{c}_i(x) =  \max_{\| x\| \leq 1} \| A x -b \|^2,
\end{equation}
where $A = 2^{-\frac{1}{2}}D^{\frac{1}{2}} \left(\ve^{-1}I_m - ( \ve
I_m+4\Sigma_j)^{-1}  \right)$ and $b = 2^{\frac{1}{2}} \theta_j D^{\frac{1}{2}}( \ve
I_m+4\Sigma_j)^{-1} $. TRP in Eq.~\eqref{eq:trust_region1}
is widely investigated in the literature. Important properties are given in \cite{GQT66,G81,MS83}. In~\ref{apx:Trust-Region}, we provide complementary theoretical details about this problem. In the following, we assume that the solution $x_i^*$ was found such that the maximal value of $\hat{c}_i(x)^T\hat{c}_i(x)$ is known.

Lemma~\ref{cor:5_1} provides an additional bound for $\| F_l\left( x \right) ^T \|^2_F $.
\begin{lemma}
\label{cor:5_1}
Assume that $F_l\left( x \right) $ is defined according to Eq.~\eqref{eq:trancated_frob}. Then, for all $x \in X$
\begin{equation}
\label{eq:trace_bound}
\| F_l\left( x \right) ^T \|^2_F \leq \sum_{i=1}^n \left [1  - e^{-\| \hat{c}_i(x_i^*) \|^2} \left( \sum_{j=0}^l  \frac{\| \hat{c}_i(x_i^*) \|^{2j}}{j!  }\right)\right].
\end{equation}
\begin{proof}
The inner product $ \phi_{l+1} \left( \hat{c}_i(x) \right)^T \phi_{l+1} \left( \hat{c}_i(x) \right)$ is the reminder of the Taylor series of $e^{\| \hat{c}_i(x) \|^2}$ with $l$ terms. Hence,
\begin{equation}
\label{eq:taylor_rem_2}
 \phi_{l+1} \left( \hat{c}_i(x) \right)^T \phi_{l+1} \left( \hat{c}_i(x) \right) = e^{\| \hat{c}_i(x) \|^2} -  \left( \sum_{j=0}^l  \frac{\| \hat{c}_i(x) \|^{2j}}{j!  }\right).
\end{equation}
Substituting Eq.~\eqref{eq:taylor_rem_2} into Eq.~\eqref{eq:rep_bound_F} yields,
  \begin{equation}
\label{eq:rep_bound_F4}
 \| F_l\left( x \right) ^T \|^2_F = \sum_i  \left( 1 -  e^{-  \| \hat{c}_i(x) \|^2} \sum_{j=0}^l  \frac{\| \hat{c}_i(x) \|^{2j}}{j!  }\right) =  \sum_i \varphi \left(\| \hat{c}_i(x) \|^2_2 \right) ,
\end{equation}
where $\varphi \left(z \right) \triangleq \left( 1 -  e^{- z} \sum_{j=0}^l  \frac{z^{j}}{j!  }\right) $.
The derivative $\frac{d\varphi }{dz}$ is given by
\begin{equation}
\label{eq:derivative}
 \frac{d\varphi }{dz} = e^{-z} \frac{z^l}{l!}.
\end{equation}
From Eq.~\eqref{eq:derivative} we get that $ \frac{d\varphi }{dz} \geq 0$ on the interval $z \in \left[ 0, \| \hat{c}_i(x_i^*)\|^2_2\right]$ ($x_i^*$ was defined in Eq.~\eqref{eq:rep_bound_F2}). Hence, $\varphi \left(z \right) $  is a monotonic function on the interval and  $\| F_l\left( x \right) ^T \|^2_F $ is bounded by the maximal values on the interval where  $\varphi \left(z \right) $ is defined by,
  \begin{equation}
\label{eq:rep_bound_F5}
 \| F_l\left( x \right) ^T \|^2_F \leq  \sum_i \varphi \left(\| \hat{c}_i(x_i^*) \|^2_2\right).
\end{equation}
\end{proof}
\end{lemma}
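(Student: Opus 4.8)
\section*{Proof proposal for Lemma~\ref{cor:5_1}}

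The plan is to recognize that the quantity $\phi_{l+1}(\hat{c}_i(x))^T\phi_{l+1}(\hat{c}_i(x))$ appearing in the Frobenius-norm expansion is nothing but the tail of a convergent exponential series, and then to exploit monotonicity to push every summand to its worst case. First I would start from Eq.~\eqref{eq:rep_bound_F}, which already expresses $\|F_l(x)^T\|_F^2$ as $\sum_i e^{-\|\hat{c}_i(x)\|^2}\,\phi_{l+1}(\hat{c}_i(x))^T\phi_{l+1}(\hat{c}_i(x))$. Setting both arguments equal to $\hat{c}_i(x)$ in the decomposition Eq.~\eqref{eq:inner_prod_cf_innprod} gives $\phi(\hat{c}_i(x))^T\phi(\hat{c}_i(x))=e^{\|\hat{c}_i(x)\|^2}$, so that the truncated inner product equals the Taylor remainder $e^{\|\hat{c}_i(x)\|^2}-\sum_{j=0}^l \|\hat{c}_i(x)\|^{2j}/j!$, which is Eq.~\eqref{eq:taylor_rem_2}. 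Substituting this identity reduces the Frobenius norm to $\sum_i \varphi(\|\hat{c}_i(x)\|_2^2)$ with $\varphi(z)\triangleq 1-e^{-z}\sum_{j=0}^l z^j/j!$, exactly the content of Eq.~\eqref{eq:rep_bound_F4}.

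The crux is then to show that $\varphi$ is nondecreasing on $[0,\infty)$. I would differentiate directly: writing $S(z)=\sum_{j=0}^l z^j/j!$ one has $\varphi'(z)=-e^{-z}(S'(z)-S(z))$, and the telescoping $S'(z)-S(z)=\sum_{k=0}^{l-1}z^k/k!-\sum_{j=0}^l z^j/j!=-z^l/l!$ collapses everything to $\varphi'(z)=e^{-z}z^l/l!\ge 0$, which is Eq.~\eqref{eq:derivative}. Equivalently, one may note the integral representation $\varphi(z)=\int_0^z e^{-t}t^l/l!\,dt$ (a normalized lower incomplete gamma function), from which both the nonnegativity of the integrand and the monotonicity are immediate. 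I expect this telescoping/integral identity to be the only step requiring genuine attention, though it is short and elementary.

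Finally I would invoke the definition of $x_i^*$ in Eq.~\eqref{eq:rep_bound_F2} as the maximizer of $\|\hat{c}_i(x)\|$ over the unit ball. For every fixed $x\in X$ we have $\|\hat{c}_i(x)\|^2\in[0,\|\hat{c}_i(x_i^*)\|^2]$, so monotonicity of $\varphi$ yields $\varphi(\|\hat{c}_i(x)\|_2^2)\le \varphi(\|\hat{c}_i(x_i^*)\|_2^2)$ termwise. Summing over $i$ gives $\|F_l(x)^T\|_F^2\le\sum_{i=1}^n \varphi(\|\hat{c}_i(x_i^*)\|_2^2)$, i.e. Eq.~\eqref{eq:rep_bound_F5}, and expanding $\varphi$ reproduces the claimed bound Eq.~\eqref{eq:trace_bound}. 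Since the majorant is assembled from the global maximizers $x_i^*$ rather than from $x$, the estimate holds uniformly for all $x\in X$, as required.
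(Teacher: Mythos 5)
Your proposal is correct and follows essentially the same route as the paper's proof: express $\|F_l(x)^T\|_F^2$ via the Taylor remainder identity, define $\varphi(z)=1-e^{-z}\sum_{j=0}^l z^j/j!$, show $\varphi'(z)=e^{-z}z^l/l!\geq 0$, and bound each summand at the maximizer $x_i^*$. Your explicit telescoping computation of $\varphi'$ and the incomplete-gamma integral representation are small refinements the paper leaves implicit, but they do not change the argument.
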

Let $G_{b_1} \left(l\right)  \triangleq  \sum_i \frac{\| \hat{c}_i(x_i^*)\|^{2(l+1)}}{\left(l+1 \right)!}$ be the bound from Eq.~\eqref{eq:rep_bound_F2} and let $G_{b_2} \left(l\right) \triangleq  \sum_i \varphi \left(\| \hat{c}_i(x_i^*) \|^2_2\right) $ be the bound from Eq.~\eqref{eq:trace_bound}. Furthermore, let $B_{l_{max}}$ be the bound on  $ \| F_l\left( x \right) ^T \|^2_F \leq B_{l_{max}}$ that takes into consideration the bounds in Eqs.~\eqref{eq:trace_bound} and \eqref{eq:rep_bound_F2} such that
\begin{equation}
\label{eq:agg_bound}
 B_{l_{max}} = min(G_{b1}\left(l\right),G_{b2}\left(l\right)).
\end{equation}

By examining the stationary distribution component $\nu_\ve(x)$ in the denominator of Eq.~\eqref{eq:rep_diff_dist}, we assume that a given a minimal threshold $\nu_{min}$ establishes that any $\nu_\ve(x)$, which is smaller than
$\nu_{min}$, is negligible. By substituting Eq.~\eqref{eq:agg_bound}  into Eqs.~\eqref{eq:rep_bound_F5} and $\nu_{min}$ into Eq.~\eqref{eq:rep_diff_dist} we get
 \begin{equation}
\label{eq:rep_diff_bound}
 \|{f\left( x \right) - f_l\left( x \right)} \|^2_2\leq  \eta \triangleq \frac{(2\pi)^{-m/2}}{\abs{2 D}^{1/2} \nu_{min}^2 } B_{l_{max}}.
\end{equation}
By imposing the  requirement in Eq.~\eqref{eq:appx_dm} for $l$ in the form
\begin{equation}
\label{eq:bound_const}
\|{f\left( x \right) - f_l\left( x \right)} \|^2_2\leq \frac{\zeta^2}{4}
\end{equation}
we find the minimal $l_{max}$  such that
 \begin{equation}
\label{eq:zeta_req}
 \frac{(2\pi)^{-m/2}}{\abs{2 D}^{1/2} \nu_{min}^2 } B_{l_{max}} \leq \frac{\zeta^2}{4}.
\end{equation}
The number  $l_{max}$ of Taylor terms of the Taylor expansion is derived by computing
Eq.~\eqref{eq:zeta_req} for an increasing integers from $1$ until the
requirement in Eq.~\eqref{eq:zeta_req} is satisfied. We expect that for a practical use, $100$
iterations are sufficient.

The error in the estimated embedding is bounded.
Hence,  the error in the estimated diffusion distance in Eq.~\eqref{eq:appx_dm} is also bounded as Lemma~\ref{cor:distancebound}  proposes.
\begin{lemma}
\label{cor:distancebound}
Let $x,y \in M$ be any two data points. Assume we have a properly trained GMM and the number of Taylor terms $l$ is designed according to Eq.~\eqref{eq:zeta_req}. Then, the error in the diffusion distance (Eq.~\eqref{eq:appx_dm})  is bounded by $\zeta$.

\begin{proof}
Let $f(x)$ and $f(y)$ be the corresponding embedding of $x$ and $y$, respectively. Furthermore, let $f_l(x)$ and $f_l(y)$ be the corresponding truncated embedding of $x$ and $y$, respectively.
From Eq.~\eqref{eq:bound_const} we have $\|{f\left( x \right) - f_l\left( x \right)} \| \leq \frac{\zeta}{2}$ and $\|{f\left( y \right) - f_l\left( y \right)} \| \leq \frac{\zeta}{2}$.
Therefore, from the triangle inequality
\begin{equation*}
\begin{array}{ll}
\|{f\left( x \right) - f\left( y \right)} \| \leq & \|{f\left( x \right) - f_l\left( x \right)} \| + \|{f_l\left( x \right) - f_l\left( y \right)} \|+\|{f_l\left( y \right) - f\left( y \right)} \|, \\
 & \leq  \|{f_l\left( x \right) - f_l\left( y \right)} \|+ \zeta,
\end{array}
\end{equation*}
\begin{equation*}
\begin{array}{ll}
\|{f_l\left( x \right) - f_l\left( y \right)} \| \leq & \|{f\left( x \right) - f_l\left( x \right)} \| + \|{f\left( x \right) - f\left( y \right)} \|+\|{f_l\left( y \right) - f\left( y \right)} \|, \\
 & \leq  \|{f\left( x \right) - f\left( y \right)} \|+ \zeta.
\end{array}
\end{equation*}
Hance, $ \|{f\left( x \right) - f\left( y \right)} \| - \zeta \leq \|{f_l\left( x \right) - f_l\left( y \right)} \| \leq \|{f\left( x \right) - f\left( y \right)} \| + \zeta $.
\end{proof}
\end{lemma}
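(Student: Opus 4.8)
The plan is to reduce the distance-preservation error entirely to the \emph{per-point} truncation error, which has already been controlled by the design choice of $l$. By Lemma~\ref{lem:diff_rep}, the full (untruncated) representation $f$ preserves the diffusion distance exactly, so $\|f(x)-f(y)\| = d_\varepsilon^{(1)}(x,y)$. Hence it suffices to show that the truncated distance $\|f_l(x)-f_l(y)\|$ differs from $\|f(x)-f(y)\|$ by at most $\zeta$; substituting the exact identity then yields precisely the bound in Eq.~\eqref{eq:appx_dm}.

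First I would invoke the per-point bound. Since $l$ is chosen to satisfy Eq.~\eqref{eq:zeta_req}, the requirement in Eq.~\eqref{eq:bound_const} guarantees $\|f(x)-f_l(x)\|\le \zeta/2$, and the same estimate applied to $y$ gives $\|f(y)-f_l(y)\|\le \zeta/2$. These two inequalities are the only quantitative input the argument needs, and they are exactly what was established for the truncated representation.

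Next I would apply the triangle inequality in both directions among the four vectors $f(x),f(y),f_l(x),f_l(y)$. The grouping $\|f_l(x)-f_l(y)\|\le \|f_l(x)-f(x)\| + \|f(x)-f(y)\| + \|f(y)-f_l(y)\|$ produces the upper bound $\|f_l(x)-f_l(y)\|\le \|f(x)-f(y)\| + \zeta$, while the symmetric grouping $\|f(x)-f(y)\|\le \|f(x)-f_l(x)\| + \|f_l(x)-f_l(y)\| + \|f_l(y)-f(y)\|$ produces the lower bound $\|f(x)-f(y)\|-\zeta\le \|f_l(x)-f_l(y)\|$. Combining the two yields $\abs{\,\|f_l(x)-f_l(y)\|-\|f(x)-f(y)\|\,}\le \zeta$, which is the desired conclusion once the exact identity of Lemma~\ref{lem:diff_rep} is inserted.

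There is no genuine obstacle here beyond bookkeeping: the estimate is a standard two-sided triangle-inequality argument. The only point requiring care is that the per-point truncation bound must hold \emph{uniformly} over all admissible points so that the \emph{same} $l$ works simultaneously for $x$ and for $y$; this uniformity is exactly what the trust-region maximization underlying Eq.~\eqref{eq:zeta_req} secures, since $B_{l_{max}}$ is computed from the worst-case $\|\hat{c}_i(x_i^*)\|$ over the unit ball.
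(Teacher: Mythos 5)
Your proposal is correct and takes essentially the same route as the paper's own proof: the per-point truncation bound $\|f(x)-f_l(x)\|\le \zeta/2$ from Eq.~\eqref{eq:bound_const}, applied at both $x$ and $y$, followed by the two-sided triangle inequality among $f(x),f(y),f_l(x),f_l(y)$ to obtain $\|f(x)-f(y)\|-\zeta\le\|f_l(x)-f_l(y)\|\le\|f(x)-f(y)\|+\zeta$. If anything, you are slightly more complete than the paper, since you explicitly invoke Lemma~\ref{lem:diff_rep} to identify $\|f(x)-f(y)\|$ with $d_\varepsilon^{(1)}(x,y)$ and you note that the truncation bound holds uniformly in $x$ (via the trust-region worst case behind Eq.~\eqref{eq:zeta_req}), both of which the paper leaves implicit.
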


\subsection{The computational complexity to compute the diffusion representation}
\label{sec:computational_cost_analysis}
Assume  that GMM-based $q(r)$ and $l_{max}$ are designed according to Eq.~\eqref{eq:zeta_req}. Assume that $l_{max}$ is the length of the representation that
is dictated by the dimension of the measurement $m$ such that  $k = \sum_{i=1}^{l_{max}} m^{i-1}+1$. The computation of $\phi \left(x \right)$  in Lemma.~\ref{lem:diff_rep} has a computational complexity of
$O \left(  m^{l_{max}-1 }n \right)$ due to the multiplication $h\left(x\right)^T F\left(x\right)$ and due to the truncation of $\phi \left(x \right)$ that has $l_{max}$ Taylor terms.
We also assume that $l_{max} > 2$. Hence, the computational complexity for the computation of the terms in the vector $h\left(x\right)$ is negligible compared to the computational complexity of $\phi \left( x \right)$.

\section{Experimental Results}
\label{sec:examples}
This section presents several examples that demonstrate the principles  of the MGC-based closed-form embedding. The first example presents an analysis of a density function for which the stationary distribution is analytically known. The closed-form stationary distribution in this case is compared to the analytical stationary distribution. The second example compares the analytical distance  computation with the corresponding distance that is computed via the closed-form embedding defined in Eq.~\eqref{eq:rep_diff_infinite}.  

\subsection{Example I: A data analysis  with an analytically known stationary distribution function}
Let the density function $q(r) \in \mathbbm{R}^2$ includes two flat squares one above the other with probability $\frac{1}{5}$ to draw samples from the lower square and $\frac{4}{5}$ to draw samples from the upper square. In other words,
\begin{equation}
\label{eq:given_q}
 q\left(r \right) = \frac{1}{5}\chi_{ \left[0, 1 \right] \times \left[0, 1 \right] }(r) + \frac{4}{5}\chi_{ \left[3, 4 \right] \times \left[3, 4 \right] } (r)
\end{equation}
where $\chi_{\left[a, b \right] \times \left[c, d \right] }$ is the indicator function for the square $abcd$. Equation.~\eqref{eq:deg_f} formulates the stationary distribution computation. Given $\varepsilon=1$, the integration in Eq.~\eqref{eq:deg_f} is analytically solved by
\begin{equation}
\label{eq:nu_simple}
\nu \left(x_1,x_2\right) =  0.2 H(0,1,x_1,x_2) + 0.8  H(3,4,x_1,x_2),
\end{equation}
where $H(a,b,x_1,x_2)$ is  given by
\begin{align*}
 H(a,b,x_1,x_2)   & \triangleq \nonumber\\
& \frac{1}{4} \left ( \erf(b-x_1) - \erf(a-x_1) \right ) \left ( \erf(b-x_2)  - \erf(a-x_2) \right ),
\end{align*}
and $\erf(x)$ is the Gauss error function.
%

Given a trained GMM, the stationary distribution is computed by using Eq.~\eqref{eq:cf_of_neu}. First, $2000$ data points were randomly selected from the distribution in Eq.~\eqref{eq:given_q}. Then, a $48 \times 48 $ grid was constructed to compute the stationary distribution via  Eq.~\eqref{eq:cf_of_neu} based on the analytical solution in Eq.~\eqref{eq:nu_simple}.

\begin{figure}[H]
\centering
\subfigure[ Analytical stationary distribution]{\label{fig:discr_ana} \includegraphics[width=0.56\textwidth] {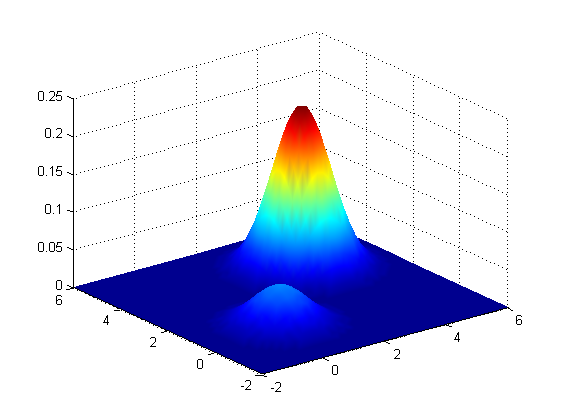}}
\subfigure[Closed-form stationary distribution] {\label{fig:discr_cf} \includegraphics[width=0.56\textwidth] {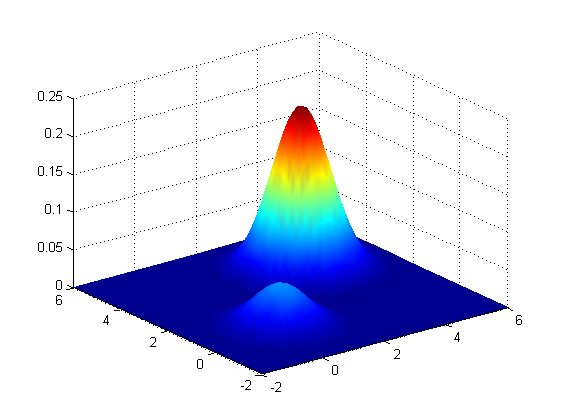}}
\subfigure[The error]{\label{fig:discr_diff} \includegraphics[width=0.56\textwidth] {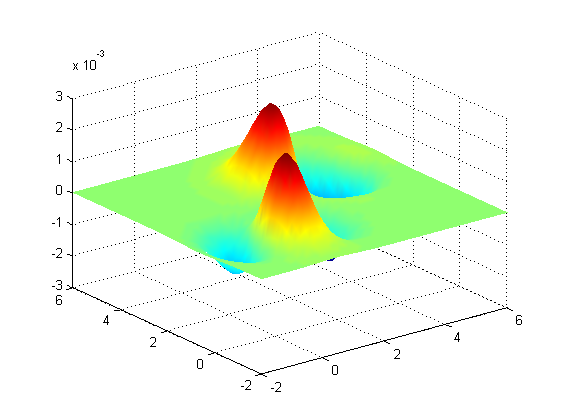}}
\caption{The difference (error) between the closed-form stationary distribution (Eq.~\eqref{eq:cf_of_neu}) and the analytical stationary distribution (Eq.~\eqref{eq:nu_simple}). (a) Analytical stationary distribution. (b) Closed-form stationary distribution. (c) The error between (a) and (b). }
\label{fig:exa_1_compare_v}
\end{figure}

Figure~\ref{fig:discr_cf} presents the stationary distribution (Eq.~\eqref{eq:cf_of_neu}) with a minor distortion compared to the analytical distribution  (Eq.~\eqref{eq:nu_simple}) presented in Fig.~\ref{fig:discr_ana}. The error, which is presented in Fig.~\ref{fig:discr_diff}, is a result from the GMM-based training on a small set of data points. The difference between the closed form stationary distribution and the analytical stationary distribution is in the order of $1\%$ and it is the result from the GMM training error.  In the following, we assume that the GMM-based training error is negligible.

\subsection{Example II: Estimation of the diffusion distance error originated from the truncated representation (Eq.~\eqref{eq:rep_diff_trancated})}
In this section, we show the diffusion distance error from the proposed truncated representation (Eq.~\eqref{eq:rep_diff_trancated}) of a specific data. Figures~\ref{fig:exa_2_hist} and \ref{fig:exa_2_compare_bound} compare between the diffusion distance in Eq.~\eqref{eq:inner_prod_cf} and the corresponding distance computed by Eq.~\eqref{eq:diff_dist_preserv}, respectively, using the truncated representations $f_l (x)$ and $f_l(z)$ computed by
\begin{equation}
\label{eq:delta_error}
\delta =  \left| \left( \frac{W_{x,x}}{\nu_\ve(x)\nu_\ve(x)}+\frac{W_{z,z}}{\nu_\ve(z)\nu_\ve(z)}-\frac{2W_{x,z}}{\nu_\ve(x)\nu_\ve(z)} \right) -  \|f_l (x) - f_l(z) \| \right|^2.
\end{equation}
A data of $3000$ data points was randomly selected from the distribution  in Eq.~\eqref{eq:given_q}. Furthermore, the bound on the worst-case diffusion distance error $\eta$ is computed using Eq.~\eqref{eq:rep_diff_bound} for the corresponding $l_{max}$ and $\varepsilon$ where $\nu_{min} = 10^{-3}$.

Figure~\ref{fig:exa_2_hist} displays the normalized  histogram of $\delta$ (Eq.~\eqref{eq:delta_error}) for different values of $\varepsilon \in [2^{-5},2^{5}]$ and different values of $l_{max} \in [1,14]$.  Figures~\ref{fig:HvsEps} and \ref{fig:HvsL} display a reduction of the distance error $\delta$ as a function of $\varepsilon$ and $l_{max}$, respectively. As $\varepsilon$ increases, $\delta$ is reduced in Fig.~\ref{fig:HvsEps}. Furthermore, as  the number of Taylor terms increases, $\delta$ is reduced in Fig.~\ref{fig:HvsL}.
\begin{figure}[H]
\centering
\subfigure[ Histogram of $\delta$ vs. $\varepsilon $ and  $l_{max}=3$]{\label{fig:HvsEps} \includegraphics[width=0.66\textwidth] {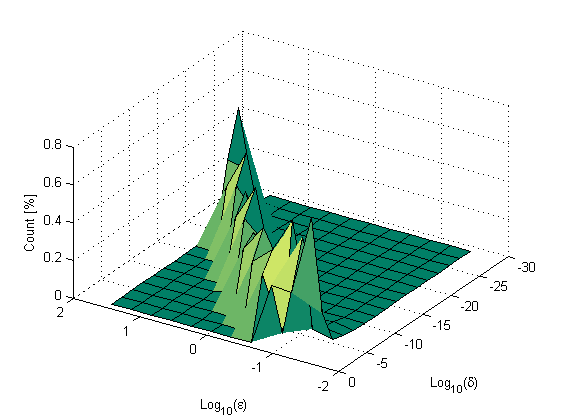}}
\subfigure[ Histogram of $\delta$ vs.  $l_{max} $ and  $\varepsilon=1$] {\label{fig:HvsL} \includegraphics[width=0.66\textwidth] {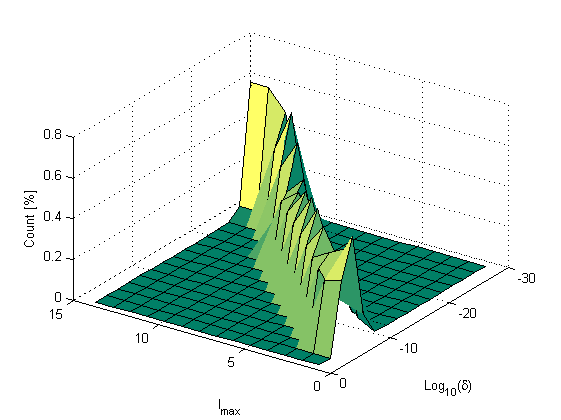}}
\caption{The histogram of the distance error $\delta$ as a function of $l_{max}$ and $\varepsilon$. (a) As a function of $\varepsilon \in [2^{-5},2^{5}]$ and fixed $l_{max}=3$. (b) As a function of $l_{max} \in [1,14]$ and fixed $\varepsilon = 1$}
\label{fig:exa_2_hist}
\end{figure}

Figure~\ref{fig:exa_2_compare_bound} compares the bound $\eta$ to the  worst-case of $\delta$ from the  $3000$ randomized data points as a function of $\varepsilon \in [2^{-5},2^{5}]$ and as a function of  $l_{max} \in [1,14]$. Figure.~\ref{fig:delta} compares between the worst-case of the diffusion distance error $\delta_{wc}$ on the dataset and the corresponding bound $\eta$ as a function of $\varepsilon$ and as a function of $l_{max}$.
\begin{figure}[H]
\centering
\subfigure[ $\delta_{wc}$ Vs. $l_{max}$ and $\varepsilon$  ]{\label{fig:delta} \includegraphics[width=0.48\textwidth] {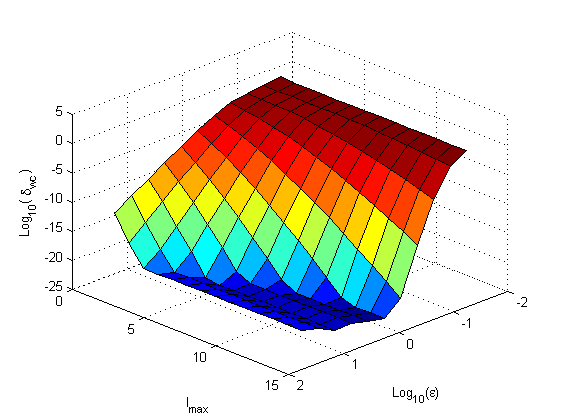}}
\subfigure[$\eta$ Vs. $l_{max}$ and $\varepsilon$ ] {\label{fig:eta} \includegraphics[width=0.48\textwidth] {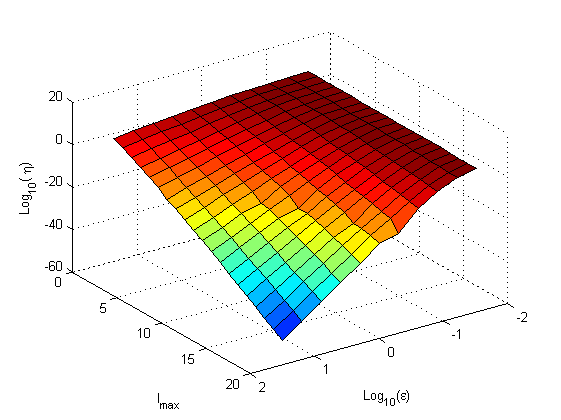}}
\subfigure[ $\eta - \delta_{wc}$  Vs. $l_{max}$ and $\varepsilon$ ]{\label{fig:etaMdelta} \includegraphics[width=0.48\textwidth] {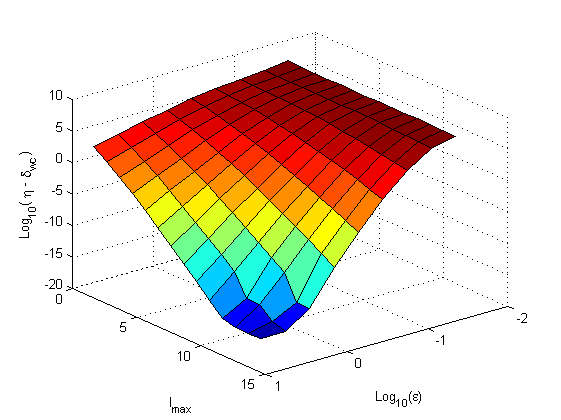}}
\subfigure[ $\eta$,  $\delta_{wc}$  Vs. $l_{max}$ and $\varepsilon$]{\label{fig:eta_and_delta} \includegraphics[width=0.48\textwidth] {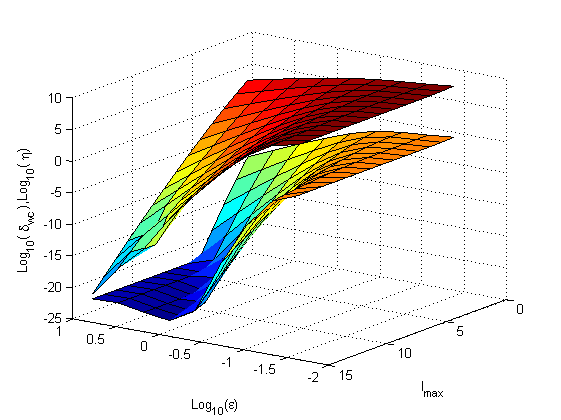}}
\caption{Comparison between the worst-case of the diffusion distance from the data  error denoted by $\delta_{wc}$ and the corresponding bound $\eta$ as a function of $\varepsilon$ and as a function of $l_{max}$. (a)  $\delta_{wc}$ as a function of $\varepsilon$ and  $l_{max}$. (b) $\eta$ as a function of $\varepsilon$ and  $l_{max}$. (c) The difference $\eta - \delta_{wc}$ as a function of $\varepsilon$ and  $l_{max}$. (d) The bound $\eta$ and the worst-case error $\delta_{wc}$ as a function of $\varepsilon$ and as a function of $l_{max}$}
\label{fig:exa_2_compare_bound}
\end{figure}

The worst-case error $\delta_{wc}$ in Fig.~\ref{fig:delta}  decreases as $l_{max}$ and $\varepsilon$ increase. Similarly, the corresponding bound in Fig.~{\ref{fig:eta} decreases as $l_{max}$ and $\varepsilon$ increase. The difference between $\delta_{wc}$ and the bound $\eta$ is shown in Figs.~\ref{fig:etaMdelta} and \ref{fig:eta_and_delta}, respectively.  $\eta$ bounds  $\delta_{wc}$ from above.

\section{Conclusions}
The presented methodology provides an alternative to a non-parametric kernel method approach for obtaining data representations via spectral decompositions of a big kernel operator or matrix with finite settings. The presentation of our approach is based on the MGC diffusion kernel~\cite{wolf:MGC} and on the resulting measure-based DM embedding obtained by its decomposition. We showed that when the underlying measure is modeled by a GMM, an equivalent embedding, which preserves the diffusion geometry of the data, can be computed without the need to decompose  the full kernel. Furthermore, this computation does not depend on the size of the dataset, therefore, it is suitable for modern scenarios where large amounts of data are available for analysis.

The proposed data representation is achieved by finding and decomposing an explicit form of the measure-based diffusion distance in terms of GMM. The decomposition is done by utilizing a Taylor decomposition and by an associated bound for the number of Taylor terms that was formulated for a given measure-based diffusion distance error tolerance requirement. The resulted representation length grows exponentially with the number of Taylor terms, hence, for a practical use, the error tolerance requirement  should consider the data dimensionality.

\subsection{Future work}
Future work will focus on extending the proposed methodology to a more general settings in the following ways below to reduce the representation dimension and to generalize the explicit form for additional kernels.

The utilized generating function for the inner products decomposition is based on the Kronecker product and hence incorporates its structure. This structure maybe utilized to reduce the representation length by using a variant of random projection. Thus, the dimensionality of the representation can reduced.

The proposed methodology is valid for additional Gaussian based kernels such as the standard DM kernel. The formulation of an explicit form for this kernel is straightforward for the above analysis. Hence, similar considerations may lead to an explicit form for the diffusion distance in this case. This research direction may lead to more general explicit forms that maybe used to represent datasets.

We provided an explicit form for the inner product associated with the measure-based diffusion distance, therefore, we are able to explicitly and efficiently compute a matrix of pair-wise distances that consider the entire dataset. This matrix can be decomposed via the application of MDS and the proposed representation can be projected to the resulted MDS space that may has a low dimension due to the involved spectral decomposition.

The explicit  resulted representation form can be used to find the most distant data points in terms of the diffusion distance by solving a proper optimization problem. The resulted data points can be used to form a dictionary or to provide data points samples for any general use.

The proposed explicit map and the explicit diffusion distance are differentiable, hence, both may be used to characterize data points in terms of change rate in the diffusion maps as a function of a given specific feature. This characterization may be helpful in identifying both the anomality and the associated features that are the source for anomality occurrence  .

The proposed representation corresponds to $t=1$. An additional interesting research direction is to generalize the analysis to any $t>1$ to allow a time diffusion multiscale  analysis of datasets.

\section*{Acknowledgment}
\noindent This research was partially supported by the Israeli Ministry of Science \& Technology (Grants No. 3-9096, 3-10898), US-Israel Binational Science Foundation (BSF 2012282), Blavatnik Computer Science Research Fund and Blavatink ICRC Funds.

\appendix
\section{Proof of Theorem~\ref{thm:gmm_bsc}}
\label{apx:ThmProof}
First, we introduce some basic Gaussian distribution related equalities. Denote  $\Lambda_{ij}\triangleq (\Sigma_i^{-1}+\Sigma_j^{-1})^{-1},$
$\psi_{ij}\triangleq \Lambda_{ij}(\Sigma_i^{-1}\theta_i+\Sigma_j^{-1}\theta_j)$ and
$\Psi_{ij}\triangleq \Sigma_i+\Sigma_j.$ In the following, we will use the  normal distribution identity
\begin{equation}
\label{eq:g_d_prod}
g_m(y;\theta_i,\Sigma_i)\cdot g_m(y;\theta_j,\Sigma_j)=g_m(y;\psi_{ij},\Lambda_{ij})\cdot g_m(\theta_i;\theta_j,\Psi_{ij}).
\end{equation}
Particularly,
\begin{equation}
\label{eq:g_d_sqr}
g_{m}^{2}(y;\theta,\Sigma)=(2\pi)^{-d/2}\abs{2\Sigma}^{-1/2}g_m(y;\theta,(1/2)\Sigma).
\end{equation}
By combining Eq.~\eqref{eq:g_d_prod} and the fact that the Gaussian function in Eq.~\eqref{eq:g_d_def} is normalized to a $L^2$ unit norm, we have
\begin{equation}
\label{eq:g_d_correlation} \int g_m(y;\theta_i,\Sigma_i)\cdot
g_m(y;\theta_j,\Sigma_j)dy = g_m(\theta_i;\theta_j,\Psi_{ij}).
\end{equation}
Another useful Gaussian relation is due to rescaling of the covariance matrix $\Sigma$ and
the mean vector $\theta$ in Eq.~\eqref{eq:g_d_def} such that
\begin{equation}
\label{eq:g_d_avg}
g_m \left(\frac{x+y}{2};\theta,\Sigma\right)=4^{d/2}g_m\left(y+x;2\theta,4\Sigma\right).
\end{equation}
Now, we are ready for the proof of Theorem~\ref{thm:gmm_bsc}.

\begin{proof}
An equivalent representation of the MGC kernel in Eq.~\eqref{eq:kernel}, as was proved in~\cite{wolf:MGC}, is
\begin{equation}\label{eq:kernel_reform}
k_\varepsilon\left(x,y\right) =  g_m\left(y;x,\ve I_m\right) \int g_m\left(r;\frac{x+y}{2},\frac{\ve}{4}I_m\right) q(r) dr.
\end{equation}
By substituting the GMM model (Eq.~\eqref{eq:gaus_mix}) into Eq.~\eqref{eq:kernel_reform}, we
get
\begin{equation}
\label{eq:kernel_gmm} k_\ve(x,y)  = g_m(y;x,\ve I_m)\sum_{j=1}^n
a_j\int g_m\left(r;\frac{x+y}{2},\frac{\ve}{4}I_m\right)\cdot
g_m(r;\theta_j,\Sigma_j)dr.
\end{equation}
Thus, from Eq.~\eqref{eq:g_d_correlation}, we get
\begin{equation}
k_\ve(x,y) = g_m(y;x,\ve I_m)\sum_{j=1}^n a_j g_m\left(\frac{x+y}{2};\theta_j,\frac{\ve}{4}I_m+\Sigma_j\right).
\end{equation}
Due to Eqs.~\ref{eq:g_d_prod} and \ref{eq:g_d_avg}, we get
\begin{eqnarray*}
g_m(y;x,\ve I_m) g_m\left(\frac{x+y}{2};\theta_j,\frac{\ve}{4}I_m+\Sigma_j\right)& = & 4^{d/2}g_m(y;x,\ve I_m) g_m(y;2\theta_j-x,\ve I_m+4\Sigma_j)\\
& = & 4^{d/2}g_m(x;2\theta_j-x,2\varepsilon I_m+4\Sigma_j)g_m(y;c_j(x),D_j)\\
& = & g_m(x;\theta_j,(\ve/2) I_m+\Sigma_j)g_m(y;c_j(x),D_j)
\end{eqnarray*}
where $D_j\triangleq (\varepsilon^{-1}I_m+(\varepsilon I_m+4\Sigma_j)^{-1})^{-1} = \ve(2\ve I_m+4\Sigma_j)^{-1}(\ve I_m+4\Sigma_j)$ and   $c_j(x)\triangleq D_j (\ve^{-1}x+(\ve I_m+4\Sigma_j)^{-1}(2\theta_j-x))$.
Thus, for $\tilde{\Sigma}_j\triangleq (\ve/2)I_m+\Sigma_j$,
we get
\begin{equation}\label{eq:kernel_ele_cf}
k_\ve(x,y)=\sum_{j=1}^n a_j
g_m(x;\theta_j,\tilde{\Sigma}_j)g_m(y;c_j(x),D_j).
\end{equation}
Thus, an immediate consequence from Eq.~\eqref{eq:deg_f} is $\nu_\ve(x)  =  \sum_{j=1}^n a_j g_m(x;\theta_j,\tilde{\Sigma}_j)$. Finally, since
\begin{equation}
\label{eq:inner_int_3}
\begin{array}{ll}
&k_\ve(x,y)\cdot k_\ve(z,y)=  \\ \nonumber
& \sum_{j,i=1}^n g_m(x;\theta_j,\tilde{\Sigma}_j)g_m(y;c_j(x),D_j)g_m(z;\theta_i,\tilde{\Sigma}_i)g_m(y;c_j(z),D_i),
\end{array}
\end{equation}
then by utilizing Eq.~\eqref{eq:g_d_correlation}, we get
\begin{eqnarray*}
\inp{k_\ve(x,\cdot)}{k_\ve(z,\cdot)}_{L^2(\Rn{d})} & = &\int k_\ve(x,y)k_\ve(z,y)dy\\
& = &\sum_{j,i=1}^n a_j a_i g_m(x;\theta_j,\tilde{\Sigma}_j)g_m(z;\theta_i,\tilde{\Sigma}_i)g_m(c_j(x);c_k(z),D_j+D_i).
\end{eqnarray*}
\end{proof}

\section{The Trust Region optimization problem (TRP)}
\label{apx:Trust-Region}
The trust region problem (or subproblem) is an optimization problem of the form
\begin{equation}
\label{eq:trust_region}
 {w}_{opt} = \arg \min_{\parallel w
\parallel_2 \leq \rho} \quad \left\{ w^* B  w +2w^* u \right\},
\end{equation}
where $B \in R^{m \times m}$ is a symmetric matrix and $u \in
R^m$. In the diffusion representation case, we have
\begin{equation}
\label{eq:trust_region2}
 w_{opt} = \arg \max_{\parallel w
\parallel_2 \leq \rho}
\parallel A  w + b
\parallel_2^2 = \arg  \min_{\parallel w
\parallel_2 \leq \rho} \left\{ w^* \left(-A^T A \right) w -2w^* A^T b - b^T b \right\}.
\end{equation}
Define the Lagrange function
\begin{equation}
\label{eq:trust_region_lagrange}
 L \left(w,\tau \right) = w^*\left(-A^T A \right)
 w -2w^T A^T b - b^T b +\tau \left(w^* w - \rho^2 \right).
\end{equation}
Let $B=A^T A$ and $u=A^T b $. A necessary and
sufficient conditions for $w$ to be an optimal solution for
the TRP is that there exists $\tau_{opt} \geq 0$
such that the following are satisfied \cite{SS82}
\begin{description}
\item $w_{opt}^T w_{opt} \leq \rho^2$ (Feasibility),
 \item $\left( \tau_{opt} I - B \right) w_{opt} = u$
 (Stationarity),
  \item $\tau_{opt} \left(w_{opt}^* w_{opt} - \rho^2 \right)=0$
 (Complementary Slackness),
\item $ \tau_{opt} I - B \succeq 0$ (2nd order necessary
conditions).
\end{description}
Furthermore, when $ \tau_{opt} I - B > 0$, then the optimal
solution $w_{opt}$ is unique.  Depending on the values of
$A$, $b$ and $\rho$, different ways of solving the TRP need to be considered. Different cases may
occur and are referred in the literature as the easy case and
the hard case. The hard case or near hard case is what causes
numerical difficulties in solving the TRP.
For the easy case, the optimization problem in Eq.~\eqref{eq:trust_region} has a single solution. However, for the hard case or near hard case, this optimization problem had multiple solutions that maximizes the optimization objective. For more details see \cite{Yuan2015}.


\bibliographystyle{plain}
\bibliography{MGC_cf}
\end{document}